\newcommand{\groups}[0]{\mathcal{G}}
\newcommand{\group}[0]{g}
\newcommand{\trainloss}[0]{\ell_{\textrm{train}}}
\newcommand{\data}[0]{\mathcal{D}}
\newcommand{\groupData}[0]{\data_{\group}}
\newcommand{\externality}[0]{\Delta}
\newcommand{\defname}[0]{data externality\ }
\newcommand{\defnamesnospace}[0]{data externalities}
\newcommand{\defnames}[0]{data externalities\ }
\newcommand{\Defnames}[0]{Data externalities\ }
\newcommand{\defnamenodata}[0]{externality}
\DeclareMathOperator*{\argmin}{arg\,min}
\theoremstyle{definition}
\newtheorem{definition}{Definition}
\newtheorem{remark}{Remark}
\newtheorem{claim}[remark]{Claim}
\title{
Striving for data-model efficiency: \\ Identifying data externalities on group performance
%
}
\author{%
  Esther Rolf\thanks{Work done while interning at Google Research.} \\
  Harvard University\\
  \And
  Ben Packer \\
  Google Research \\
  \AND
  Alex Beutel \\
  Google Research \\
  \And
  Fernando Diaz \\
  Google Research \\
}
\begin{document}

\maketitle

\begin{abstract}
Building trustworthy, effective, and responsible machine learning systems hinges on understanding how differences in training data and modeling decisions interact to impact predictive performance.
%
%
In this work, we seek to better understand how we might characterize, detect, and design for data-model synergies. 
We focus on a particular type of data-model inefficiency, in which adding training data from some sources can actually lower performance evaluated on key sub-groups of the population, a phenomenon we refer to as \emph{negative \defnames on group performance}. 
Such externalities can arise in 
standard learning settings and can manifest differently depending on conditions between training set size and model size.
\Defnames directly imply a lower bound on feasible model improvements, 
yet 
improving models efficiently requires understanding the underlying data-model tensions.
From a broader perspective, our results indicate that data-efficiency is a key component of both accurate and trustworthy machine learning.


\end{abstract}

\section{Introduction}


Although key aspects of trustworthiness and responsibility in machine learning are often framed from an algorithmic perspective, we explore an alternative framing that focuses on how our chosen modeling and training procedures perform under different data (collection) regimes. 
We refer to the guiding goal of using the available training data to achieve the best possible performance 
for all 
target populations as ``data efficiency.''
While this 
has clear alignment with accuracy maximization, data minimization, and fairness, it is not always clear how to test or design for data-efficiency generally.

We focus on a specific type of data \emph{inefficiency}, in which adding training data from certain data sources can actually decrease performance on key groups of the target population. We call this phenomenon \emph{negative \defnames on group performance} (defined formally in \cref{def:data_externality}).
While recent works have examined how the amount of training data from one group affects performance evaluated on that same group \cite{chen2018my,elzayn:fat2020,rolf2021representation}, studying \emph{between-group} trends can evidence tensions between different aspects of model performance
that might otherwise go unnoticed.


In this work, we formalize \defnames and their immediate cost to model performance (\S\ref{sec:setting},\S\ref{sec:data_externalities}). 
%
We describe mechanisms by which negative \defnames can arise and detail why understanding these mechanisms matters for designing effective model improvements (\S\ref{sec:synthetic_examples}).  
Our experiments show how \defnames shift with conditions on sample size and model capacity 
(\S\ref{sec:experiments}).
Altogether, our results suggest that \defnames  may be suprisingly pervasive in real-world settings, yet avoidable by designing training procedures to promote data-efficiency.
Our findings have implications to fairness, privacy, participation, and transparency in machine learning 
(\S\ref{sec:conclusion}).

\paragraph{Related Work. }
Alongside a recognition of the 
role of training data in achieving high-performing models \cite{aroyo2022data,rolf2021representation,sun2017revisiting}
has come an appreciation that dataset size does not imply requisite quality \cite{borgman2017big}, representativeness \cite{bradley2021unrepresentative}, or diversity \cite{bender_gebru_2021}. 
For example, concerns have been raised about the use of large, multi-source text datasets, including transparency of data provenance \cite{dodge-etal-2021-documenting} contributed to by a ``documentation debt'' \cite{bender_gebru_2021}. 
%
In this work, we 
study how training data from different sources can affect model performance, focusing on possible externalities to key subgroups of the target population. 

Prior work has proposed scaling laws to describe how the total number of training samples and  model size affect aggregate performance \cite{ bansal2022data,kaplan2020scaling}.
%
Our study connects to a growing line of research on 
how varying the amount of training data from \emph{different} sources, groups, or classes impacts model performance, including subgroup and average accuracies \cite{hashimoto2021model,rolf2021representation}, 
and statistical fairness \cite{chen2018my,elzayn:fat2020}. 
Scaling laws that have been proposed to model the impact of data from one dataset (e.g.  \cite{bansal2022data,kaplan2020scaling}) or multiple source groups (e.g. \cite{chen2018my,rolf2021representation}) on model performance typically implicitly encode that more data never worsens performance, and thus do not allow for \defnames of the type we study here. 
Data valuation techniques have been proposed to quantify the (possibly negative) value of individual data points \cite{ghorbani2019,kwon2021beta}.
In a framework more similar to our own, \cite{jain2022data}
studies scenarios where leaving out entire classes from a training dataset can improve transfer learning performance.
%



Finally, we note that the general terms ``data externalities'' and ``information externalities''  
have also been used to describe externalities on other factors, such as privacy \cite{choi2019privacy} and data valuation \cite{ichihashi2021economics}.



\section{Setting and notation}
\label{sec:setting}

We assume that each training instance is collected from exactly one of $g \in \groups_{\textrm{source}}$ distinct source groups. Each evaluation instance can be associated with any subset of predefined evaluation groups $g \in \groups_{\textrm{eval}}$, which may be intersecting. In this work, we assume that groups are known at training and evaluation time. We will often make the simplifying assumption that $\groups_{\textrm{eval}} = \groups_{\textrm{source}}$ referring to the set of groups simply as $\groups$.

We assume that 
one can sample instances $(x,y) 
\sim \groupData$ for each group in $ g \in\groups_{\textrm{source}} \cup \groups_{\textrm{eval}}$, where  $\groupData$ denotes the data distribution corresponding to group $\group$.  We say that a model $f(\cdot)$ results in group risks, or expected losses of $\mathcal{R}(g) =\mathbb{E}_{(x,y) \sim \mathcal{D}_g}[ \ell ( f(x), y)]$ defined with respect to to loss function $\ell$.

\paragraph{Per-group risks as random variables parameterized by training dataset composition.}
Following \cite{rolf2021representation}, we consider the learned model  $f_{\theta}$ as a random function parameterized by both the model parameters $\theta$ and allocations governing sample sizes from each source. Denote the (random) training dataset $\mathcal{S}$  as the union of the $|\groups_{\textrm{source}}|$ sets each comprising samples 
from one training source: 
\begin{align}
    \mathcal{S}(n_1, \ldots , n_{|\groups_\textrm{source}|}) =\bigcup_{\group \in \groups_{\textrm{source}}} \{ (x_i,y_i) \sim_{i.i.d.}  \mathcal{D}_g \}_{i=1}^{n_g} ~
    \label{eq:sample}
\end{align}
where sample sizes are determined by the total dataset size $n = \sum_{\group \in \groups_{\textrm{source}}} n_g$ and \emph{allocations} $n_g/n$. 
%
With these definitions in place, we can define the expected risk of a training procedure as a function of the composition of the training set. 
For example, for a standard training procedure that selects a model from class $\mathcal{F}_\Theta = \{f_\theta\}_{\theta \in \Theta}$ to minimize loss $\trainloss$:
\begin{align}
   \mathbb{E}[\mathcal{R}(g_{\textrm{eval}}) ; \{n_g\}_{g \in \groups_{\textrm{source}}}; \mathcal{F}_\Theta, \trainloss]
   = 
    \mathbb{E}_{\mathcal{S}( \{n_g\}_{
    })} \left[ 
    \mathbb{E}_{(x,y) \sim \mathcal{D}_{g_\textrm{eval}}} \left[ \ell(f_\theta(x),y) | \theta = \theta^*_{\ell_{\textrm{train},\mathcal{S}}} \right] 
    \right]
    \label{eq:risk_parameterized}
\end{align}
where $\theta^*_{\ell_{\textrm{train},\mathcal{S}}} = \arg\min_{\theta \in \Theta } \trainloss(f_\theta, \mathcal{S})$.
\Cref{eq:risk_parameterized} 
%
gives us a framework by which to describe performance of our learning procedure across evaluation groups as sensitive to many different scenarios and choices, including the training source sample sizes $n_g$ and model class complexity. 

\section{\Defnames and their costs}
\label{sec:data_externalities}

We now use the framework of \S\ref{sec:setting} to characterize our main phenomenon of study: when adding training data from a particular source group decreases performance for an evaluation group.
%
%
%
%

\begin{definition}[\textbf{negative \defname on group risk}]
\label{def:data_externality}
For fixed training procedure described by model class $\mathcal{F}$ and loss objective $\trainloss$, a negative \defname on group risk (w.r.t. groups $\mathcal{G}_{\textrm{eval}}$) occurs when the expected risk for some evaluation group can \emph{increase} as a result of \emph{adding} randomly sampled training data: 
\begin{align}
\label{eq:data_externality}
    &\exists \ \group_{\textrm{eval}} \in \groups_{\textrm{eval}}, \ \{n_g' \leq n_g \}_{g \in \groups_{\textrm{source}}}:  \\
   &\hspace{8em} 
    \mathbb{E}[\mathcal{R}(g_{\textrm{eval}}) ; \{n'_g\}_{g \in \groups_{\textrm{source}}}; \mathcal{F}, \trainloss] < \mathbb{E}[\mathcal{R}(g_{\textrm{eval}}) ; \{n_g\}_{g \in \groups_{\textrm{source}}}; \mathcal{F}, \trainloss] ~. \nonumber
\end{align}
\end{definition}

We can interpret \cref{eq:data_externality} across all possible allocations that could be collected, or with respect to an existing dataset with total size $n = \sum_{g \in \groups_{\textrm{source}}}n_g$, where  $\mathcal{D}_g$ denote empirical distributions and sampling is done uniformly at random.
%
In the second case, a \defname exposes an implicit ``cost'' to some evaluation group, formalized as a room for improvement, $\Delta$, in the following claim.

\begin{claim}[\Defnames lower bound room for model improvement]
\label{claim:room_for_improvmenet} 
For a training dataset with  $\{n_g\}_{g \in \groups_{\textrm{source}}}$, for fixed training procedure with model class $\mathcal{F}$ and training loss $\ell_{\textrm{train}}$, the maximum magnitude of \defname  $\externality_{g_{\textrm{eval}}}$ on group $g_{\textrm{eval}}$,
\begin{align}
\label{eq:delta}
    \externality_{g_{\textrm{eval}}} := \max_{
    n_g' \leq n_g \forall g 
    }
    \big[
    \mathbb{E}[\mathcal{R}(g_{\textrm{eval}}) ; \{n_g\}_{g \in \groups_{\textrm{source}}}; \mathcal{F}, \trainloss] 
    -
     \mathbb{E}[\mathcal{R}(g_{\textrm{eval}}) ; \{n'_g\}_{g \in \groups_{\textrm{source}}}; \mathcal{F}, \trainloss] 
     \big],
\end{align}
is a lower bound on the best possible improvement in expected risk for group $g_\textrm{eval}$ that can be achieved using this dataset  \emph{without raising the expected risk for groups disjoint from $g_\textrm{eval}$}.
\end{claim}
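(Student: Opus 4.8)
The plan is to prove the lower bound constructively: I will exhibit one concrete way of using the dataset that simultaneously achieves an improvement of exactly $\externality_{g_{\textrm{eval}}}$ on group $g_{\textrm{eval}}$ and leaves the expected risk on every disjoint group unchanged. Since the ``best possible improvement subject to the disjointness constraint'' is a supremum over feasible strategies, producing a single feasible strategy that attains $\externality_{g_{\textrm{eval}}}$ immediately certifies that the supremum is at least $\externality_{g_{\textrm{eval}}}$, which is exactly the claimed bound.

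First I would unpack the definition in \cref{eq:delta}. Let $\{n_g^*\}_{g \in \groups_{\textrm{source}}}$ denote an allocation attaining the maximum, so that $\externality_{g_{\textrm{eval}}} = \mathbb{E}[\mathcal{R}(g_{\textrm{eval}}) ; \{n_g\}; \mathcal{F}, \trainloss] - \mathbb{E}[\mathcal{R}(g_{\textrm{eval}}) ; \{n_g^*\}; \mathcal{F}, \trainloss]$ with $n_g^* \leq n_g$ for all $g$. Taking $n_g' = n_g$ is always feasible, so $\externality_{g_{\textrm{eval}}} \geq 0$, and the content of the claim concerns the case where subsampling strictly helps $g_{\textrm{eval}}$. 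Training the procedure on the subsample governed by $\{n_g^*\}$ therefore lowers the expected risk on $g_{\textrm{eval}}$ by precisely $\externality_{g_{\textrm{eval}}}$ relative to training on the full dataset $\{n_g\}$.

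The construction then exploits the standing assumption that group membership is known at evaluation time, which lets us assign a (possibly different) trained model to each evaluation group when computing that group's risk. I would assign to $g_{\textrm{eval}}$ the model fit on the optimal subsample $\{n_g^*\}$, and to every other group the model fit on the full dataset $\{n_g\}$. Under this per-group assignment, the expected risk on $g_{\textrm{eval}}$ drops by $\externality_{g_{\textrm{eval}}}$ by construction, while the expected risk on each group disjoint from $g_{\textrm{eval}}$ equals its full-data baseline value and is hence not raised. This exhibits a feasible strategy attaining improvement $\externality_{g_{\textrm{eval}}}$ and completes the argument.

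The step requiring the most care is justifying that such a per-group model assignment is a legitimate way of ``using the dataset'' rather than an illicit expansion of the hypothesis class; this is precisely what the known-groups-at-evaluation assumption buys, since it licenses fitting and deploying separate models per group from the same pool of data. A secondary subtlety is instances lying in groups that intersect $g_{\textrm{eval}}$: because the claim constrains only disjoint groups and targets only $g_{\textrm{eval}}$, and these two families of groups share no instances by definition, I would simply note that the assignment on overlapping groups may be chosen arbitrarily without affecting either the guaranteed improvement on $g_{\textrm{eval}}$ or the no-harm guarantee on disjoint groups.
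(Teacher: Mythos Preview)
Your proposal is correct and follows essentially the same constructive argument as the paper: choose the optimal subsample allocation, train a second model on it, and use the assumption that groups are known at evaluation time to route $g_{\textrm{eval}}$ instances to the subsample-trained model while routing all other instances to the full-data model. Your write-up is in fact more careful than the paper's, since you explicitly note the non-negativity of $\externality_{g_{\textrm{eval}}}$, justify why the per-group routing is a legitimate use of the dataset, and address the status of overlapping evaluation groups.
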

\begin{proof}
We can construct an alternative training procedure that first subsets the training data uniformly at random from each source group according to the $n_g'$ that maximize the expression in \cref{eq:delta}. 
Since groups are assumed to be known, we can selectively apply this new model to instances from $g_{\textrm{eval}}$ and use the original model for all other instances. 
This split model lowers expected risk by $\externality_{g_{\textrm{eval}}}$ for $g_{\textrm{eval}}$ and does not alter expected risk for instances not in $g_{\textrm{eval}}$.
As this procedure only optimizes over the training data sub-sampling, 
%
it is a \emph{lower bound} for the possible performance improvements.
\end{proof}
\Cref{claim:room_for_improvmenet} highlights that identifying data externalities can improve the model for groups on which the model under-performs, without any negative consequence to expected risk evaluated on disjoint evaluation groups.
However, data externalities also tell us something more subtle about the compatibility of our model with the underlying structures in our data.
In the next section, we  investigate possible causes of data externalities, and what they mean for improving model  performance.


\section{When do negative data externalities arise?}
\label{sec:synthetic_examples}

\begin{figure}[b]
     \centering
     \begin{subfigure}[b]{0.48\textwidth}
         \centering
         \includegraphics[height=1.45in]{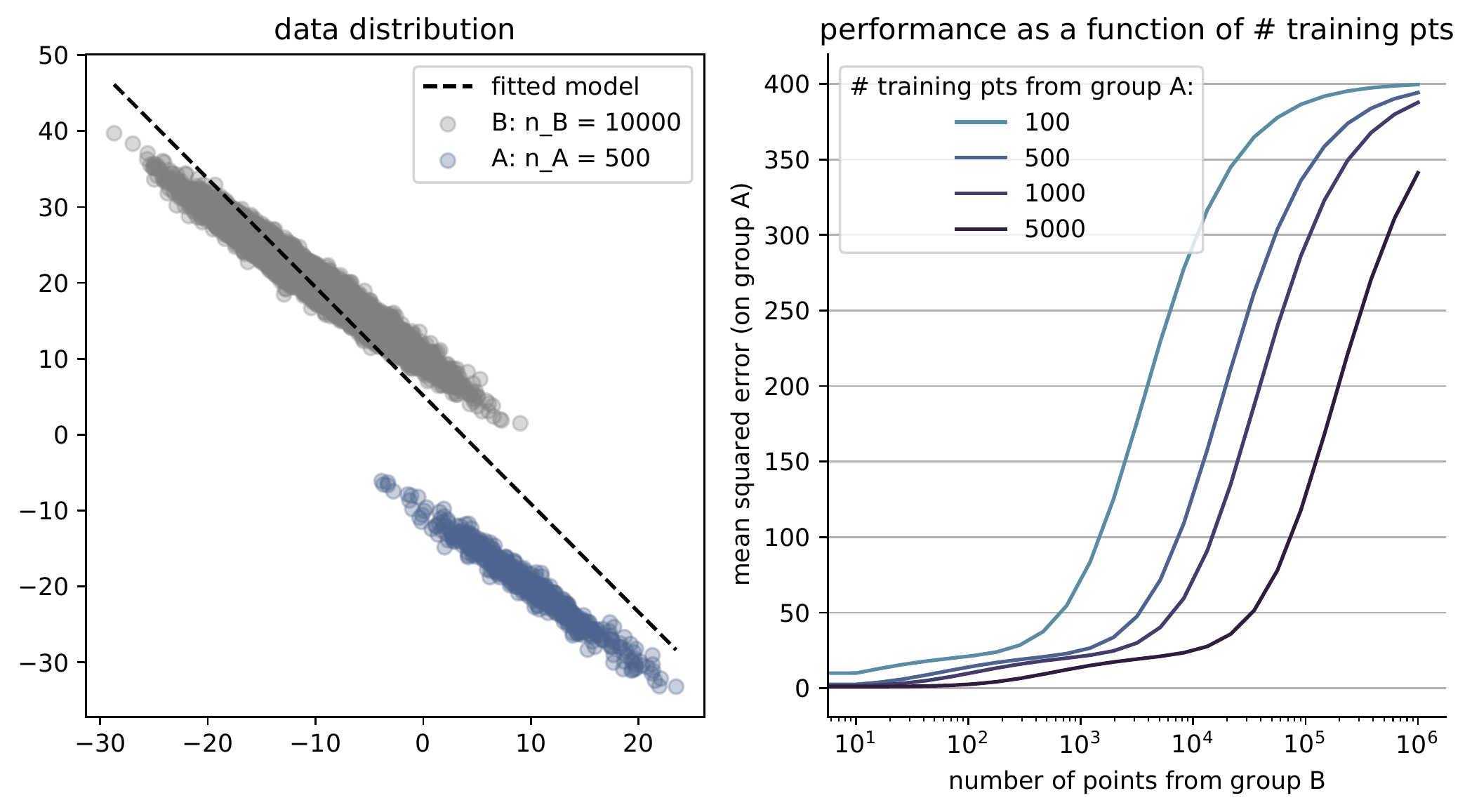}
         \caption{\textbf{Insufficient/mis-allocated model capacity}: The true intercept of optimal model differs between groups but model class requires a fixed intercept. This results in data externalities on one group as the number of training samples from the other group increases.  }
         \label{fig:linear_example_diff_intercepts}
     \end{subfigure}
     \hfill
     \begin{subfigure}[b]{0.48\textwidth}
         \centering
         \includegraphics[height=1.45in]{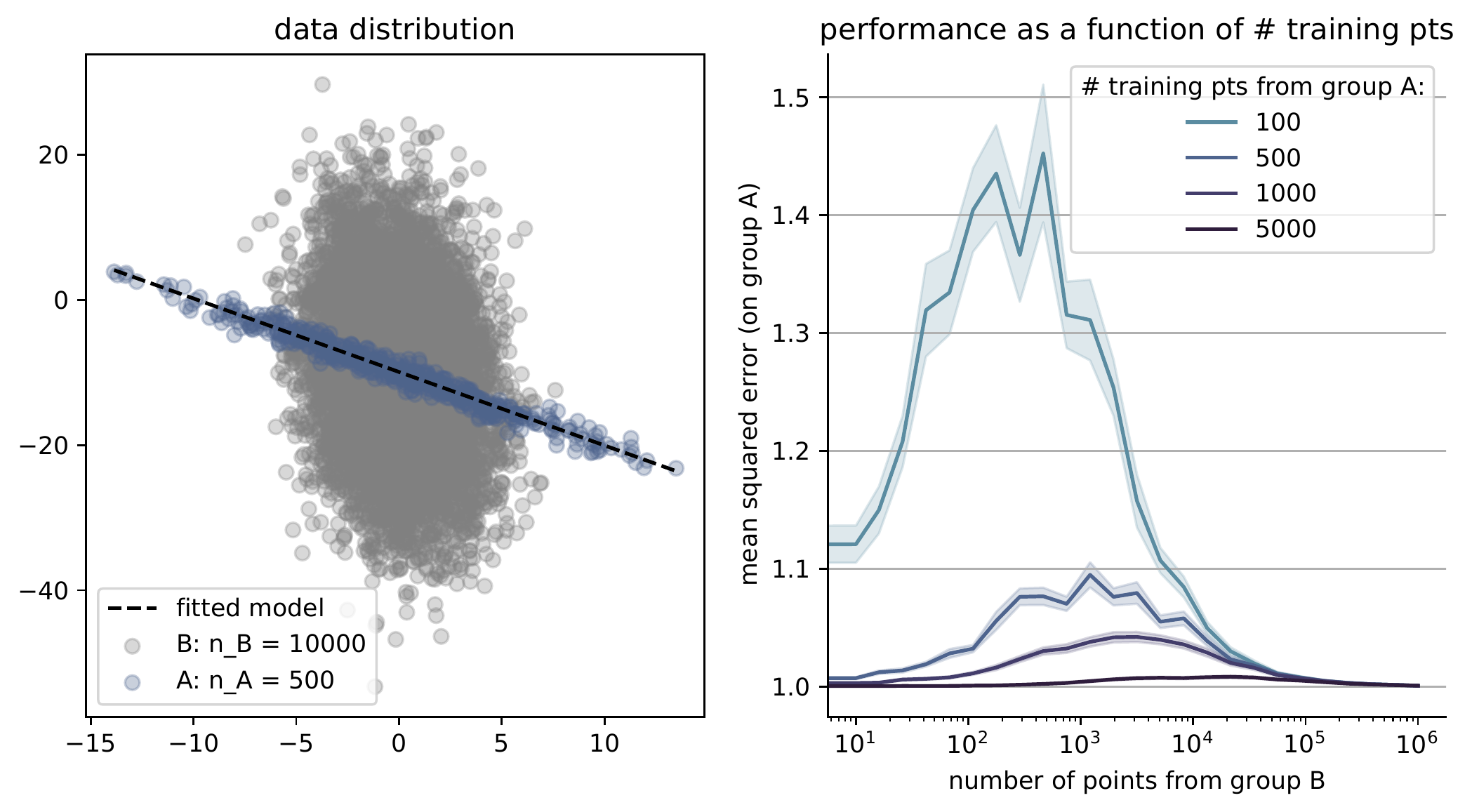}
         \caption{\textbf{
         Different group data distributions}.
         The true model is shared between groups, but the distributions of features and observation noise differs. Adding small $n_B$ can detract from performance on group A due to increased noise in the training data. 
         }
         \label{fig:linear_example_diff_obs_noise}
     \end{subfigure}
     \hfill
     \caption{
     Two illustrative examples based on the data-generating distribution described in \S\ref{sec:synthetic_examples}. }
     \label{fig:linear_example_figure}
\end{figure}

An intuitive setting where \defnames can arise is when the  complexity of model class $\mathcal{F}_{\Theta}$ is constrained or mis-specified so that the optimal parameters differ per group, i.e. $\theta^*_{g} \neq \theta_{g'}^*$, where $\theta^*_g := \argmin_{\theta \in \Theta} \mathbb{E}_{(x,y )\sim \groupData}[\ell(f_\theta(x),y)]$. Here we detail such a setting, as well as another example in which \defnames arise even when the optimal model is the same for all groups ($\theta^*_{g} = \theta_{g'}^*$). 

Our examples 
use an illustrative model where we assume there exists a true affine relationship for each group, with shared linear weights but different intercepts (see \S\ref{sec:synthetic_example_details} for exact parameters):
\begin{align*}
    y_g = w \cdot x_g + b_A \cdot \mathbb{I}[g = A] + b_B \cdot \mathbb{I}[g = B] + \epsilon_g \quad g \in \{A,B\} = \mathcal{G} ~,
\end{align*}   
but the model class is the set of affine models shared between groups ($f_\theta(x,g) = \theta_1 x + \theta_2$). 
In the first example, we vary the intercept of the true model between the groups, as well as the mean of the feature distribution (\cref{fig:linear_example_diff_intercepts},  left).
The discrepancy between the true model and the model class results in \defnames with magnitude ($\Delta$ in \cref{eq:delta}) increasing monotonically as the number of samples from the other group increases (positive slopes in \cref{fig:linear_example_diff_intercepts}, right).

In the second example, the same model parameters apply to both groups, but we vary the spread of the feature distribution and scale of observation noise $\epsilon_g$ between groups,
%
effectively decreasing the signal-to-noise ratio for group $B$ relative to group $A$ (\cref{fig:linear_example_diff_obs_noise}, left). 
This results in \defnames evaluated on group $A$ for small to mid-range number of samples from group $B$, which dissipate with larger $n_B$ (\cref{fig:linear_example_diff_obs_noise}, right).  What constitutes ``small to mid-range'' values of $n_B$ is relative to $n_A$: 
the magnitude of the negative \defnames decrease with sufficient samples from group $A$.

In these two examples, negative \defnames arise for different reasons and \textbf{the modeling intervention that would best address the \defnames depends on the cause of the tension}. 
In the first example, allowing for a more complex model class which fits a different intercept term for each group (expanding model capacity in a targeted way) will alleviate \defnames for any $\{n_A, n_B\}$.
In the second example, removing negative \defnames by 
splitting the model by group
would eliminate the positive externalities of adding samples from group B when $n_B$ is large ($\geq 1e5$ in \cref{fig:linear_example_diff_obs_noise}).
%
%
A more appropriate strategy for the second example would reweight  instances according to their source group in when computing and optimizing the training loss.

While  \defnames signal a clear opportunity to improve performance (\cref{claim:room_for_improvmenet}), the examples above highlight that best way to make model improvements will depend on the setting.
\Defnames could thus be considered as a symptom indicating sub-optimal data-efficiency of a given modeling procedure. 
Remedying the exposed tensions in an effective manner will require understanding the underlying mechanisms giving rise to the observed \defnamesnospace.
%

\section{\Defnames with real data}
\label{sec:experiments}
%
%
Experiments in this section expose negative \defnames with respect to the empirical distributions defined by two different real-world datasets (see \S\ref{sec:data_details} for more details on datasets):

The \textbf{Goodreads datasets} \cite{goodreads_paper} contain book reviews and ratings of books from different genres. 
    We collect data for  two genres -- history/biography (\emph{history}) and fantasy/paranormal (\emph{fantasy}) --  which comprise the two groups $g$ in our setup. 
    Similar to in \cite{rolf2021representation}, the  binary prediction task is to discern whether a book review corresponds to a 5 star rating (1) or less (0), given the text of the review.

The \textbf{CivilComments dataset with identity information} \cite{borkan2019nuanced} contains online comments with human-annotated labels corresponding to whether the comment is considered toxic and whether the comment is targeted at a specific group $g$. We focus on the four largest identity groups present in the dataset: \emph{female, male, Christian, and Muslim} (groups are determined as binary labels if the annotator average is at least $0.5$ for that identity group, similar to toxicity labels).


Beyond evidencing that negative \defnames can manifest in real-data settings, we design experiments to understand \emph{when} they manifest, in light of results from \S\ref{sec:synthetic_examples}. We examine how \defnames arise under different conditions on total sample size (\S\ref{sec:data_externalities_sample size})
and model capacity (\S\ref{sec:data_externalities_model_capacity}).

To identify \defnames with existing datasets, we sub-sample the available training data from each source group $g$ uniformly at random with different allocations defined by $\{n_\group\}_{\group \in \groups}$. 
To estimate \cref{eq:risk_parameterized} for each group, we measure the per-group performance of the resulting models on  fixed evaluation sets. 
We report per-group area under the reciever operating characteristic (AUROC) as a metric that is insensitive to class imbalances (see \S\ref{sec:experiment_details}). 
%
We report variability across multiple random draws of the training set for each allocation $\{n_\group\}_{\group \in \groups}$.


\subsection{\Defnames and dataset size (book rating prediction)}
\label{sec:data_externalities_sample size}

We first examine how \defnames manifest across different possible sample sizes and ratios between groups, using the rating prediction task described above. 
To compare performance across many subsets of training data, we chose a model that is fast to train.
Following \cite{chen2018my,rolf2021representation}, we use a linear regression model with $\ell_1$ penalty (lasso), trained on 1000-dimensional tf-idf vector embeddings of the review texts. The $\ell_1$ penalty is chosen via a cross-validation search for each subset. 

\begin{figure}[!t]
    \centering
    \includegraphics[width=0.7\textwidth]{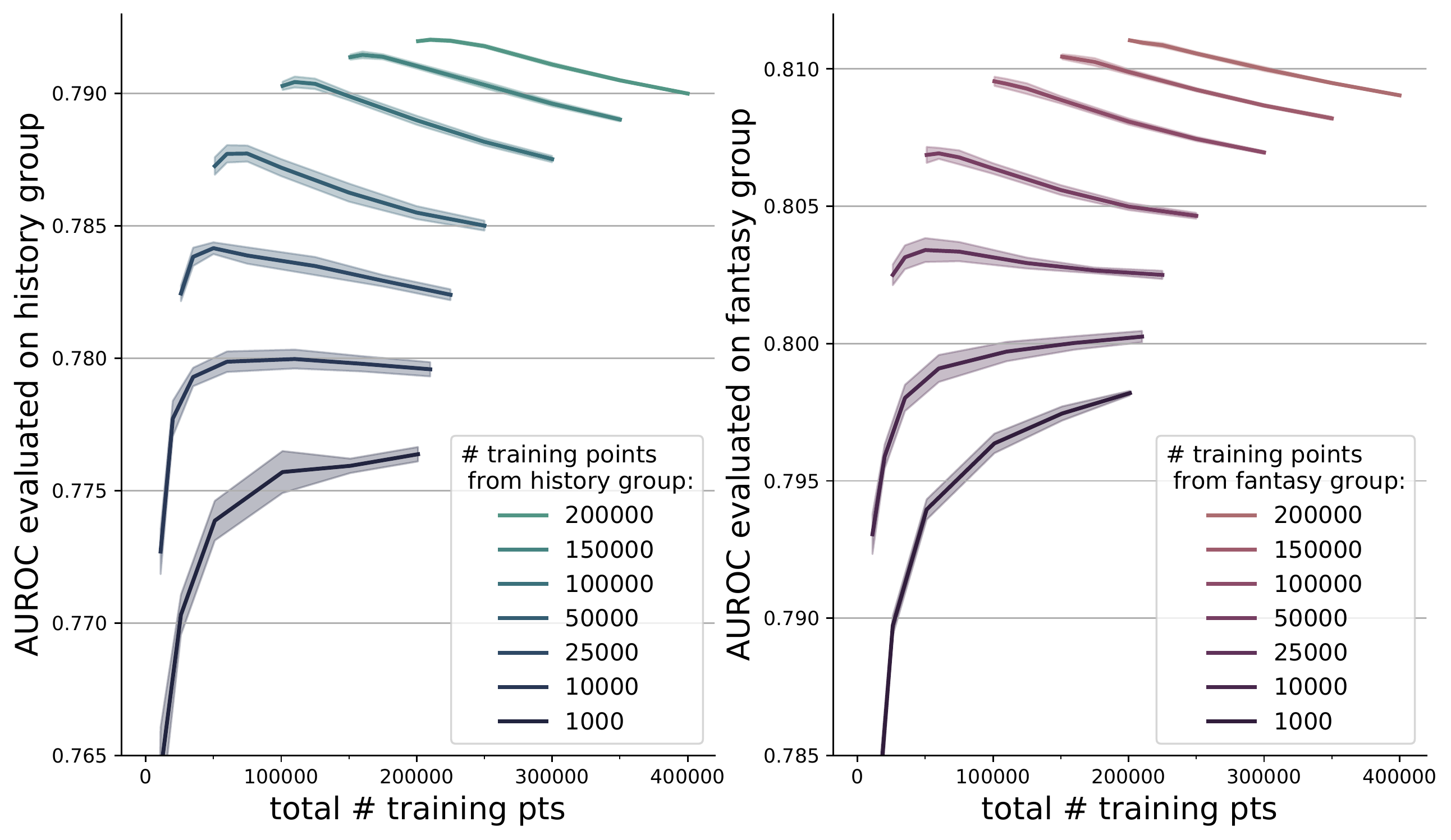}
    \caption{
    \textbf{\Defnames manifest when there is sufficient training data from the group being evaluated.}
    Each curve fixes the number of training points corresponding to the evaluation group. 
    From left to right, training data is added randomly from the other genre.  
    Solid lines show average per-group performance; shaded regions show 2 standard errors above and below the mean over 10 trials. Negative slopes diagnose \defnames measured with respect to per-group AUROC.} \label{fig:goodreads_data_externalities_only}
\end{figure}


\Cref{fig:goodreads_data_externalities_only} shows that \textbf{negative data externalities can arise in real-data settings}, evidenced by the decrease in AUROC evaluated one group as data from another group is added to the training set (left to right on horizontal axis). 
As discussed in \cref{claim:room_for_improvmenet}, these externalities provide obvious modeling interventions to increase per-group performance.
For each panel (evaluation group) in \cref{fig:goodreads_data_externalities_only}, we could subset the full training data to the allocation that maximizes AUROC along the vertical axis, resulting in two models trained with different training subsets, and as a result obtaining greater performance on each group (compared to performance with the full training set, $n=400,000$).

The magnitude of the \defnamenodata\ (measured from peak of each curve to rightmost point) is small, as expected from previous work that assumes between-group trends have a negligible effect on per-group performance as a function of training set allocations \cite{chen2018my,rolf2021representation}. 
Nonetheless, the existence of \defnames suggests that in some contexts, more nuanced scaling laws would be appropriate for describing model performance across data allocations.

The curves in \cref{fig:goodreads_data_externalities_only} are not all monotonic, meaning there is not an ``all or nothing'' answer to whether merging or splitting training data from multiple source groups optimizes model performance: sometimes, the best performance for group A is achieved by adding a moderate amount of training samples from group B.
In fact, negative \defnames tend to manifest only once a certain number of points from the group in question are present in the training set (lighter-colored curves).
%

Drawing on our understanding from \S\ref{sec:synthetic_examples}, we hypothesize that as the total number of training points increases, the training data ``saturates'' the model class, in the sense that the variance reduction due to additional points from group B is not worth the bias away from the optimal model parameters 
with respect to  group A's data distribution. 
The exact saturation point would depend on the distribution of features and labels in each group and the distance between the group-optimal model parameters, the latter depending on the capacity of the model class.
To examine this further, our next set of experiments 
examines how \defnames manifest with models of different capacity.

\subsection{\Defnames and model size (toxicity classification)}
\label{sec:data_externalities_model_capacity}

We now examine how \defnames can differ for models of different sizes, using the CivilComments with identities dataset described above. 
%
We fine-tune pre-trained miniature BERT models of different model capacity from \cite{turc2019well}. Model architectures are determined by the number of transformer layers $L$ and hidden embeddings size $H$ with corresponding number of attention heads. 
For each fine-tuning run we use the Adam optimizer with learning rate 0.0001 and weight decay of 0.01 and train for 100 epochs with batchsize 64 and 500 gradient steps per epoch (see \S\ref{sec:experiment_details}).

\begin{figure}[!t]
    \centering
    \includegraphics[width=\textwidth]{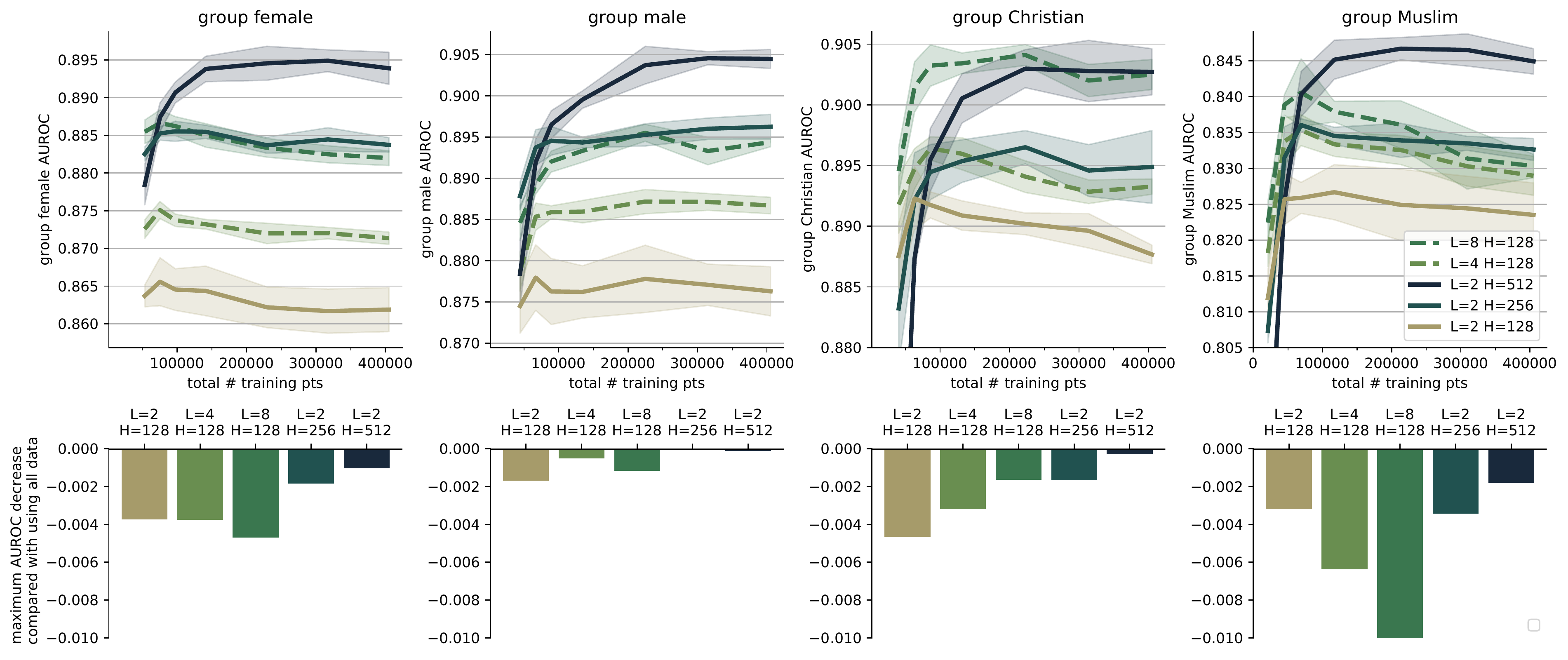}
            \caption{\textbf{Increasing model capacity has a complicated effect on magnitude of \defnamesnospace .} Performance and \defnames are measured across groups in the CivilComments dataset \cite{borkan2019nuanced} for different miniature BERT models \cite{turc2019well} with size determined by $L,H$. Top: group AUROCs for different training configurations and model architectures; shaded areas denote 2 standard errors above and below the mean across 5 trials.
            Bottom: the magnitude of \defnames 
            from the top row.}
    \label{fig:bert_capacity_data_externalities}
\end{figure}

The leftmost points in each top panel of \cref{fig:bert_capacity_data_externalities} 
start with the maximum number of training points from the given group and increase the number of training points by adding data at random from the rest of the training set. 
The different hues in \cref{fig:bert_capacity_data_externalities} rank the models in terms of number of overall parameters (see table 1 in \cite{turc2019well}). Negative slopes in the top row of \cref{fig:bert_capacity_data_externalities} (and corresponding negative values in the bottom row of \cref{fig:bert_capacity_data_externalities}) evidence that \defnames arise in this data context and prediction task.

While increasing width or depth of the miniature BERT models (generally moving left to right on the bottom panels) \emph{can} decrease the exhibited \defnames on per-group AUROC, \textbf{increasing model complexity does not necessarily mitigate negative \defnames on group performance, and in some cases can exacerbate them}. 
Adding additional layers to the model can increase the magnitude of \defnames evidenced (first three bars of groups \emph{female, Muslim} in \cref{fig:bert_capacity_data_externalities}, bottom), 
even though models with more layers tend to have higher overall performance.
While this phenomenon tends to be more stark for models of increasing depth, we caution interpretation of the relative merits of adding model capacity with either depth or width without further analysis. 

Taken together, the results in
\S\ref{sec:data_externalities_sample size} and \S\ref{sec:data_externalities_model_capacity}  evidence that \defnames manifest in different real-world data contexts and shed light on when and why they might manifest. 
Results in \cref{fig:goodreads_data_externalities_only} suggest that \defnames arise for one group primarily when there is enough representation in the training data from that group to ``saturate'' the model. Results in \cref{fig:bert_capacity_data_externalities} highlight that this point of saturation may depend on the complexity of the model parameterization among other factors.
Future work could leverage these findings to build a stronger understanding of how model capacity might be tailored to jointly increase performance while promoting data efficiency under different conditions.

\section{Discussion and open questions}
\label{sec:conclusion}


We have shown that \defnamesnospace, a phenomenon in which adding more data from some input sources reduces performance on key evaluation groups, 
can occur in many machine learning settings.
While this specific type of ``data inefficiency'' indicates room for model improvement, \cref{eq:delta} is likely to be a coarse lower bound for the possible improvements that could be made. Furthermore, the simple model modification described in the proof of \cref{claim:room_for_improvmenet} is only computationally reasonable when the number of evaluation groups is relatively small. Characterizing when and how \defnames can be (i) reliably identified for unknown evaluation groups or large number of groups and (ii) effectively mitigated within reasonable computational limits will be important future work.

We have focused on understanding how and when \defnames manifest across learning settings and training procedures. 
It would be interesting interesting to study \defnames and data-efficiency more generally as a principle by which to design algorithms from the outset.
For example, a learning procedure guaranteeing no \defnames could enhance transparency regarding how input data affects model outputs, toward aligning of the goals of data minimization, participatory approaches, and fairness with traditional performance optimization in machine learning.

\section*{Acknowledgements}
The authors thank Jilin Chen and Pranjal Awasthi for providing feedback during the conceptualization, development, and writing of this work. We thank Yoni Halpern for feedback during the writing of this manuscript and Fernanda Viegas for feedback during figure development.

\bibliographystyle{plain}
\bibliography{camera_ready}

\appendix

\section{Appendix}

\subsection{Background and context for datasets}
\label{sec:data_details}

\paragraph{Goodreads.}

The Goodreads datasets contain a corpus of book reviews across several genres, with numerical ratings and information about the book and reviewer \cite{goodreads_paper}. This dataset and task have been used to study group effects across author genders and book genres in previous work \cite{chen2018my,rolf2021representation}.

We follow a preprocessing approach similar to that in \cite{rolf2021representation}, where groups are genres. 
Specifically, we instantiate our experimental dataset with reviews from two genres: history/biography (\emph{history}) and fantasy/paranormal (\emph{fantasy}), excluding the few books with overlap between these genres.
We take 250,000 review instances at random from the most popular 1000 books per genre, and split this data into group-balanced training set of 400,000 instances and validation/test set of 100,000 instances.

We consider binary labels of whether the rating accompanying each review text is a 5 star or less rating (originally measured on a 1-5 scale). The class distribution similar between groups, with an average label of 0.39 across history group training instances, and 0.41 across fantasy training instances.
Review texts are embedded using tf-idf vectorization with 1000 features, and ignoring `english' stopwords. The tf-idf vectors are computed with respect to the entire 500,000 instances. 

\paragraph{CivilComments (with identities) dataset for toxicity classification in text.}
Classifying toxic comments in a corpus of text can be challenging due to different meanings or sentiments of words or phrases when they are used in reference to certain groups or topics \cite{dixon2018measuring}.
The CivilComments dataset with identity information \cite{borkan2019nuanced} contains online comments with human-annotated labels corresponding to whether the comment is considered toxic and whether the comment is targeted at a specific group. 

As noted in \cite{koh2021wilds}, the test set for some identity groups can be very small.
We focus on the four largest identity groups present in the dataset: female, male, Christian, and Muslim.
Groups are determined as binary labels if the annotator average is at least $0.5$ for that identity group; binary toxicity labels are assigned similarly.
The training sets contain 405,130 total instances, with 53,429 instances corresponding to the female group, 44,848 instances corresponding to the male group, 40,423 instances corresponding to the Christian group, and 21,006 instances corresponding to the Muslim group. The average toxicity rate (average binary label) differs across groups: in the training set the toxicity rates by group are:  14\% (female), 15\% (male), 11\% (Christian), and 24\% (Muslim). 

To increase the number of evaluation samples across groups, we combine the validation and test sets in our analysis. 
Combining the validation and test sets means we don't have a separate validation set to tune model hyper-parameters. 
Should a separate validation set be necessary, one could use the pre-processed dataset from \cite{koh2021wilds}, which combines some groups together to increase per-group sample sizes and allocates a larger fraction of the overall data to separate validation and test sets.

\subsection{Additional experiment details}
\label{sec:experiment_details}
\subsubsection{Per-group AUROC} Evaluating AUROC within each group measures the strength of the relative ordering of the predictions within that group. 
This is especially desirable for our experimental purposes should label distributions differ between groups, and is  appropriate since we assume groups are known during training and evaluation. 
We note the limitations of using per-group AUROC in other cases, when assigning per-group classification thresholds is infeasible \cite{borkan2019nuanced}, e.g. when groups are assumed to be unknown.

\subsubsection{Details on \S\ref{sec:data_externalities_model_capacity}}
The pretrained miniature Bert models we use in \S\ref{sec:data_externalities_model_capacity} were accessed via 
\url{https://tfhub.dev/google/collections/bert/1}. 
Details on the number of parameters, training time, and pretraining process for each model can be found in \cite{turc2019well}.  

For each fine-tuning run we use the Adam optimizer with learning rate 0.0001 and weight decay of 0.01 and train for 100 epochs with batchsize 64 and 500 gradient steps per epoch.
While we fix these parameters as part of the fixed training procedure definition for this experiment (e.g. as relates to \cref{def:data_externality}), 
%
future work could incorporate hyper-parameter optimizations as part of the training procedures for each (dataset, model) pair.

\subsection{Illustrative example details}
\label{sec:synthetic_example_details}
Our examples in \S\ref{sec:synthetic_examples} use the following linear model:
\begin{align*}
    y_g = w \cdot x_g + b_A \cdot \mathbb{I}[g = A] + b_B \cdot \mathbb{I}[g = B] + \epsilon_g \quad g \in \{A,B\} = \mathcal{G} ~.
\end{align*}   
Here we give the exact parameters that produce experimental results in \cref{fig:linear_example_figure}.
In the first example (\cref{fig:linear_example_diff_intercepts}), we vary the intercept of the true model between the groups, as well as mean of the distribution of the features:
\begin{align}
\label{eq:linear_example_intercept}
    w &= -1; b_A = -10, b_B = 10 \\
    x_A &\sim \mathcal{N}(10, 5) + \epsilon_A, \quad \epsilon_A \sim \mathcal{N}(0,1) \nonumber \\
    x_B &\sim \mathcal{N}(-10, 5) + \epsilon_B, \quad \epsilon_\textbf{A} \sim \mathcal{N}(0,1) ~. \nonumber 
\end{align}   

In the second example (\cref{fig:linear_example_diff_obs_noise}), the same model applies to both groups, but we vary the parameters governing observation noise $\epsilon_g$ between groups, as well as the spread of the feature distribution:
\begin{align}
\label{eq:linear_example_noise}
    w &= -1; b_A = b_B = -10 \\
    x_A &\sim \mathcal{N}(-10, 5) + \epsilon_A, \quad \epsilon_A \sim \mathcal{N}(0,1) \nonumber \\
    x_B &\sim \mathcal{N}(-10, 2) + \epsilon_B, \quad \epsilon_B \sim \mathcal{N}(0,10) ~. \nonumber 
\end{align}

\end{document}